\documentclass[conference]{IEEEtran}
\IEEEoverridecommandlockouts
\usepackage{amsmath,amsfonts}
\usepackage{array}
\usepackage{textcomp}
\usepackage{stfloats}
\usepackage{url}
\usepackage{amsmath, amssymb, amsthm}
\usepackage{verbatim}
\usepackage{graphicx}
\usepackage{amssymb}
\usepackage{hyperref}
\usepackage{graphicx}
\usepackage{amsmath}
\usepackage{longtable}
\usepackage{algorithm} 
\usepackage{algpseudocode} 
\usepackage{mathrsfs}
\usepackage{subcaption}
\usepackage{mathtools}
\usepackage{pifont}
\usepackage{color}
\usepackage{lineno}
\usepackage{graphicx}  
\usepackage{makecell} 
\usepackage{pdflscape}
\usepackage{adjustbox}
\usepackage[utf8]{inputenc}
\usepackage{tabularx}
\usepackage{blindtext}
\usepackage{longtable}
\usepackage{lscape}
\usepackage{amsthm}
\usepackage{graphicx}
\usepackage{subcaption} 
\usepackage{caption}
\usepackage{xcolor}

\usepackage{setspace}
\usepackage{notoccite} 
\usepackage{lscape} 
\usepackage{mwe}
\usepackage{booktabs}
\usepackage{multirow}   
\usepackage{arydshln}   

\usepackage{amsthm}
\newtheorem{theorem}{Theorem}[section]

\newtheorem{lemma}[theorem]{Lemma}
\theoremstyle{definition}

\theoremstyle{definition}

\newcommand{\RNum}[1]{\lowercase\expandafter{\romannumeral #1\relax}}
\newcommand{\RNumU}[1]{\uppercase\expandafter{\romannumeral #1\relax}}
\usepackage[numbers]{natbib}

\def\BibTeX{{\rm B\kern-.05em{\sc i\kern-.025em b}\kern-.08em
    T\kern-.1667em\lower.7ex\hbox{E}\kern-.125emX}}
\begin{document}

\title{Residual-Guided Randomized Neural Networks%
\thanks{Accepted at the 2026 IEEE World Congress on Computational Intelligence (WCCI 2026).}
}

\author{
\IEEEauthorblockN{Mushir Akhtar}
\IEEEauthorblockA{
\textit{Department of Mathematics} \\
\textit{Indian Institute of Technology Indore}\\
phd2101241004@iiti.ac.in}
\and
\IEEEauthorblockN{M. Tanveer}
\IEEEauthorblockA{
\textit{Department of Mathematics} \\
\textit{Indian Institute of Technology Indore}\\
mtanveer@iiti.ac.in}
\and
\IEEEauthorblockN{Mohd. Arshad}
\IEEEauthorblockA{
\textit{Department of Mathematics} \\
\textit{Indian Institute of Technology Indore}\\
arshad@iiti.ac.in}
}

\maketitle
\begin{abstract}
Randomized neural networks enable fast and analytically tractable training by fixing the input to hidden layer parameters at random and learning the output weights in closed form; however, their performance critically depends on a single uninformed draw of hidden units. This one shot and task uninformed feature construction often leads to redundant representations and suboptimal utilization of model capacity. To address this limitation, we propose a simple and broadly applicable residual guided procedure that greedily constructs the hidden layer using a closed form residual decrease criterion. At each stage, we (i) generate a pool of random candidate units, (ii) score each candidate by the exact reduction it induces in the ridge regularized objective, (iii) select the top $k$ units, and (iv) refit the readout in closed form using the standard design with direct input links. This procedure yields a progressive training process with a guaranteed monotonic decrease of the training objective. The method is model agnostic: only the candidate generation is architecture specific, while the scoring selection refitting loop is shared across models. Extensive experiments on 71 benchmark datasets from the UCI repository, covering both binary and multiclass classification tasks, demonstrate that the proposed residual-guided models consistently outperform their baseline counterparts in terms of accuracy, stability, and overall ranking performance. 

\end{abstract}

\begin{IEEEkeywords}
Randomized neural networks, residual guided learning, closed form training, random feature construction, greedy incremental learning.
\end{IEEEkeywords}

\section{Introduction}
\IEEEPARstart{R}{andomized} neural networks (RaNNs) have attracted significant attention as efficient alternatives to deep learning models, particularly in scenarios involving limited data, strict computational constraints, or real-time requirements \cite{pao1994learning, cao2018review, suganthan2021origins}. Unlike deep architectures trained via iterative backpropagation, RaNNs fix the input-to-hidden layer parameters at random and learn only the output-layer weights, typically using closed-form. This decoupling eliminates gradient-based optimization, substantially reduces training complexity, and yields analytically tractable learning procedures while retaining universal approximation capability under mild conditions \cite{igelnik1995stochastic, needell2024random}.

Within this paradigm, several representative RaNN architectures have been proposed, differing primarily in how random features are constructed and incorporated into the predictive model. The Random Vector Functional Link (RVFL) network \cite{pao1994learning, malik2023random} is a widely adopted formulation that combines nonlinear random hidden features with direct input-to-output links, enabling the preservation of linear input structure alongside nonlinear representations. The Extreme Learning Machine (ELM) \cite{huang2006extreme} can be viewed as a special case of RVFL in which the direct input connections are removed, further simplifying the architecture while maintaining fast closed-form training. To enhance representational diversity without increasing depth, the Broad Learning System (BLS) \cite{chen2017broad1, chen2017broad2} was proposed as an alternative that expands the network horizontally by incrementally adding feature and enhancement nodes, enabling efficient incremental learning without retraining from scratch.

Building upon these foundational architectures, extensive research has focused on improving RVFL, ELM, and BLS models in terms of stability, robustness, and generalization \cite{malik2023random, 9380770}. Representative efforts include architectural refinements \cite{feng2018fuzzy, 9715258}, regularization strategies \cite{jin2018regularized, Akhtar2026}, ensemble formulations \cite{Shi2021, 10552388}, and incremental learning mechanisms \cite{LIU2024110430, 10533441}, many of which report enhanced empirical performance across diverse learning scenarios. Recently, copula-aligned weight initialization (CAWI) has been proposed to incorporate dependency-aware copula modeling into weight initialization, aligning randomized weights with the intrinsic structure of the input data and thereby improving representation quality and learning performance \cite{akhtar2026cawi}. Robust objectives and sample-reweighting mechanisms provide complementary directions: the HawkEye loss combines boundedness, smoothness, and an insensitive zone for robust regression \cite{akhtar2025hawkeye}, while its integration with RVFL yields a noise- and outlier-resistant classifier \cite{akhtar2025advancing}. For class-imbalanced settings, a location-aware slack-factor fuzzy SVM improves the assignment of sample memberships and reduces minority-class misclassification \cite{tanveer2025enhancing}.

Despite this progress, most existing extensions share a common structural limitation: hidden representations are still generated through fixed, randomly sampled feature mappings. Consequently, increasing the number of hidden nodes or feature groups often introduces redundancy, leading to inefficient utilization of model capacity and limited adaptability to the data distribution. In particular, indiscriminate expansion with additional random features provides no guarantee that newly introduced representations contribute meaningfully to reducing the supervised residual error.

These observations motivate the need for randomized learning frameworks that retain closed-form training while enabling principled and data-driven feature expansion. In this work, we propose a residual-guided randomized neural network framework that incrementally constructs hidden representations by explicitly leveraging the current residual of the model. At each stage, randomly generated candidate features are evaluated based on their closed-form contribution to residual reduction, and only the most effective features are retained. This residual-guided strategy enables adaptive and efficient feature construction while preserving the computational efficiency and analytical tractability of randomized neural networks. Importantly, the proposed framework is model-agnostic and can be seamlessly integrated into a broad class of RaNNs, including RVFL, ELM, BLS, and their variants, without modifying their fundamental training procedures.

The main contributions of this work are summarized as follows:

\begin{enumerate}
    \item We propose a general residual-guided learning framework for randomized neural networks that enables adaptive, data-driven hidden feature expansion while retaining closed-form output weight learning.
    \item The proposed framework is model-agnostic and can be seamlessly integrated into a broad class of randomized neural networks, including RVFL, ELM, BLS, and their variants, without altering their core architectures or training paradigms.
    \item We provide a theoretical monotonicity guarantee, establishing that the ridge-regularized training objective is non-increasing with the progressive addition of residual-guided hidden features, which is further validated through empirical analysis.
    \item Extensive experimental evaluations on 71 benchmark datasets covering both binary and multiclass classification tasks demonstrate that the proposed residual-guided models consistently outperform their baseline counterparts.
\end{enumerate}

The remainder of this paper is organized as follows. 
Section~\ref{sec:prelim} presents the preliminaries.
Section~\ref{sec:motivation} provides the motivation and problem statement. Section~\ref{sec:method} introduces the proposed framework. Section~\ref{Experiment-section} reports experimental results. Finally, Section~\ref{Conclusions-section} concludes the paper.

\section{Preliminaries}
\label{sec:prelim}
This section establishes the notation and reviews the Random Vector Functional-Link (RVFL) network, one of the standard randomized neural network.

\subsection{Notation}
\label{subsec:notation}
Let $\mathbf{U}\in\mathbb{R}^{n\times p}$ denote the input data matrix with $n$ samples and $p$ features, and let $\mathbf{T}\in\mathbb{R}^{n\times q}$ denote the corresponding target matrix, where $q$ is the output dimension (for classification, $\mathbf{T}$ is typically one-hot encoded). Randomized hidden-layer activations are collected in $\mathbf{G}\in\mathbb{R}^{n\times h}$, generated via random affine transformations of $\mathbf{U}$ followed by an elementwise nonlinearity $\phi(\cdot)$. Each hidden unit is parameterized by a random weight vector $\mathbf{w}\in\mathbb{R}^{p}$ and a bias $\beta\in\mathbb{R}$, yielding the random weight matrix $\mathbf{W}\in\mathbb{R}^{p\times h}$. The augmented feature matrix is defined as $\mathbf{A}=[\,\mathbf{U}\;\mathbf{G}\,]\in\mathbb{R}^{n\times(p+h)}$. The output-layer parameter matrix is denoted by $\mathbf{\Omega}\in\mathbb{R}^{(p+h)\times q}$; when an explicit bias term is included, $\mathbf{A}$ is augmented with a column of ones. The transpose operator is denoted by $(\cdot)^{\top}$, $\mathbf{e}_n\in\mathbb{R}^{n}$ denotes the vector of ones, and $\mathbf{I}_k$ denotes the $k\times k$ identity matrix.


\subsection{Random Vector Functional-Link Network (RVFL)}
\label{subsec:rvfl}
RVFL is a shallow feedforward architecture consisting of input, hidden, and output layers, distinguished by the presence of direct links from the input layer to the output layer. These direct links enable RVFL to jointly exploit linear structure from the raw inputs and nonlinear structure from randomized hidden features, often improving stability and generalization. The standard RVFL architecture is illustrated in Fig.~\ref{fig:RVFL-architecture}.

In RVFL, the hidden-layer parameters are randomly sampled once and kept fixed throughout training. Given $\mathbf{W}\in\mathbb{R}^{p\times h}$ and bias $\beta\in\mathbb{R}$, the hidden-layer activations for input matrix $\mathbf{U}\in\mathbb{R}^{n\times p}$ are computed as
\[
\mathbf{G} = \phi\!\big(\mathbf{U}\mathbf{W} + \mathbf{e}_n \beta\big)\in\mathbb{R}^{n\times h}.
\]
The output layer operates on the augmented representation
$\mathbf{A}=[\,\mathbf{U}\;\mathbf{G}\,]\in\mathbb{R}^{n\times(p+h)}$
and learns the readout matrix $\mathbf{\Omega}$ by solving the
ridge-regularized least-squares problem
\begin{equation}
\label{eq:rvfl-objective}
\mathbf{\Omega}^\star
=
\arg\min_{\mathbf{\Omega}}
\;\|\mathbf{A}\mathbf{\Omega}-\mathbf{T}\|_F^2
+\lambda\|\mathbf{\Omega}\|_F^2,
\qquad \lambda>0.
\end{equation}
This problem admits a closed-form solution, which can be expressed
in either primal or dual form depending on the relative dimensions
of $\mathbf{A}$,
\begin{equation}
\label{eq:rvfl-solution}
\mathbf{\Omega}^\star=
\begin{cases}
(\mathbf{A}^{\top}\mathbf{A}+\lambda\mathbf{I}_{p+h})^{-1}
\mathbf{A}^{\top}\mathbf{T},
& \text{if } (p+h)\le n,\\[4pt]
\mathbf{A}^{\top}
(\mathbf{A}\mathbf{A}^{\top}+\lambda\mathbf{I}_n)^{-1}
\mathbf{T},
& \text{otherwise}.
\end{cases}
\end{equation}

\begin{figure}[t]
    \centering
    \includegraphics[width=0.9\linewidth]{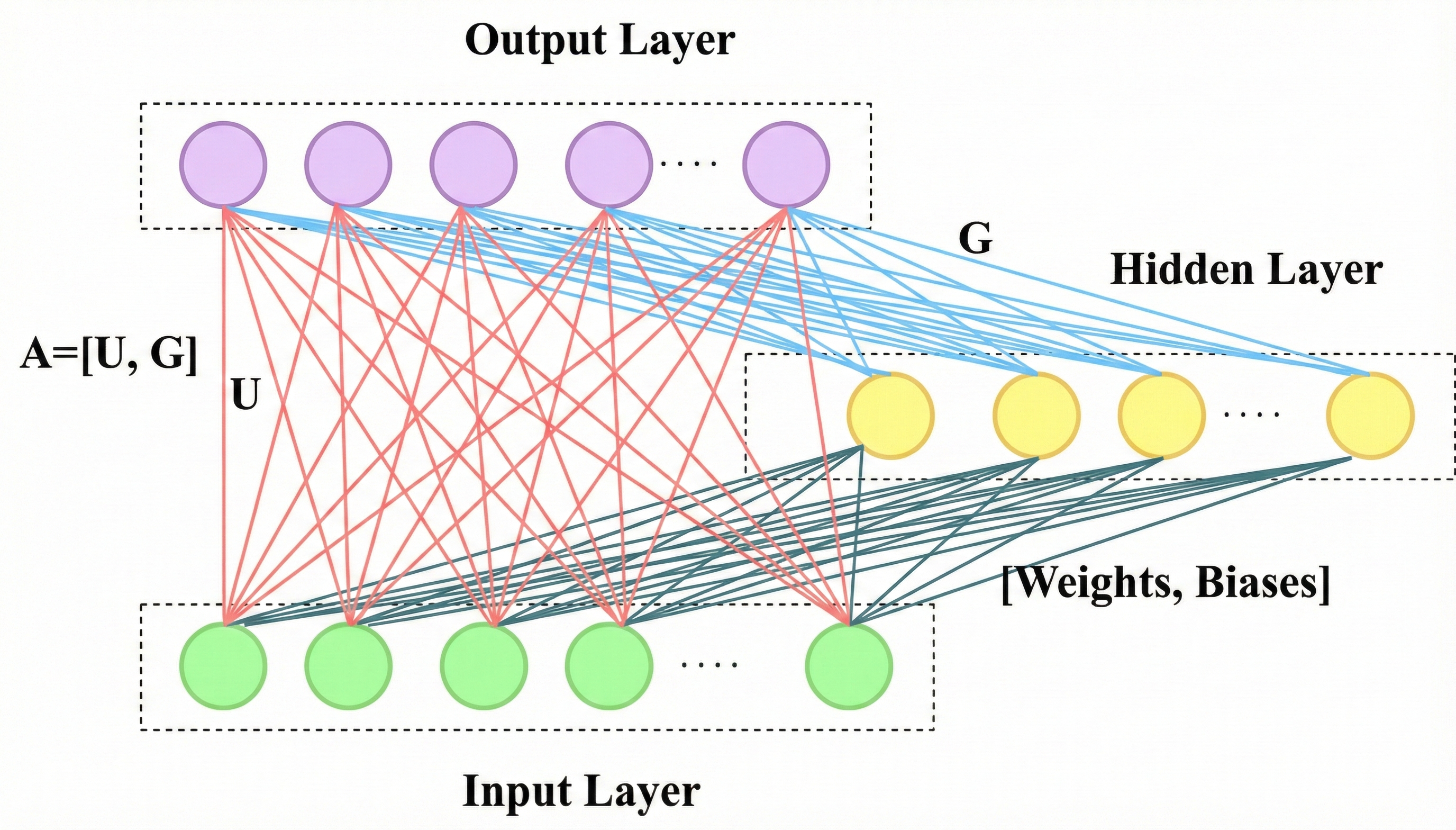}
    \caption{Architecture of a standard RVFL network.}
    \label{fig:RVFL-architecture}
\end{figure}

\section{Motivation \& Problem Statement}
\label{sec:motivation}

RaNNs are attractive since they eliminate backpropagation while enhancing linear models with nonlinear features. However, their performance critically depends on a \emph{single, task-uninformed draw} of randomized hidden features. This section clarifies why such a design is fragile and formulates the precise optimization problem addressed in this work.\\

\noindent
\textbf{Note:} For clarity, we present the subsequent discussion in the RVFL setting, a representative RaNN architecture, and note that the proposed framework extends naturally to other RaNNs.\\

\noindent
\textbf{Setting:}
Let $\mathbf{U}\in\mathbb{R}^{n\times p}$ denote the input data matrix and
$\mathbf{T}\in\mathbb{R}^{n\times q}$ the corresponding target matrix.
A randomized mapping generates hidden-layer activations
$\mathbf{G}\in\mathbb{R}^{n\times h}$, which are concatenated with the inputs to form the augmented matrix
$\mathbf{A}=[\,\mathbf{U}\;\;\mathbf{G}\,]\in\mathbb{R}^{n\times(p+h)}$.
The output layer uses a linear readout
$\mathbf{\Omega}\in\mathbb{R}^{(p+h)\times q}$, learned via ridge-regularized least squares,
\begin{equation}
\label{eq:ridge-obj-motiv}
\mathcal{J}(\mathbf{G}) \;\triangleq\;
\min_{\mathbf{\Omega}\in\mathbb{R}^{(p+h)\times q}}
\big\|\left[\,\mathbf{U}\;\; \mathbf{G}\,\right]\mathbf{\Omega}
- \mathbf{T}\big\|_F^2
\;+\; \lambda\|\mathbf{\Omega}\|_F^2.
\end{equation}

For any fixed $\mathbf{G}$, the minimizer of \eqref{eq:ridge-obj-motiv} is available in closed form. Consequently, the \emph{effective design problem} is to construct $\mathbf{G}$ so as to minimize $\mathcal{J}(\mathbf{G})$ under a prescribed hidden-unit budget.

\subsection{Core Limitations}
In the standard RVFL formulation, $\mathbf{G}$ is generated once by sampling a random affine map and applying a nonlinearity elementwise. This one-shot, label-uninformed construction leads to several practical shortcomings:
\begin{enumerate}
\item \textbf{Feature misalignment and inefficiency.}
The randomly generated columns of $\mathbf{G}$ are not tailored to the supervised task and need not align with discriminative directions in the data. Consequently, many hidden units may exhibit weak correlation with the \emph{supervised residual} after fitting the linear readout, leading to inefficient use of model capacity.

\item \textbf{Lack of adaptive feature selection.}
Once $\mathbf{G}$ is fixed, ridge regularization can only uniformly shrink the output weights; it cannot selectively suppress or discard uninformative or noisy hidden features. This restriction yields a suboptimal bias-variance trade-off and can degrade generalization when $h$ is constrained.

\item \textbf{Sensitivity to random initialization and lack of progressive guarantees.}
Performance is highly sensitive to the random seed, as the entire representation depends on a single draw of $\mathbf{G}$.
Moreover, there is no principled mechanism for progressive model building: intermediate subsets of hidden units are not guaranteed to yield monotonic improvement in the training objective or predictive performance as additional hidden units are added.
\end{enumerate}

\subsection{Problem Statement}
Let $\mathcal{S}$ denote the set of admissible hidden features generated by the randomized mapping,
\begin{equation}
\label{eq:feature-set}
\mathcal{S}
\;=\;
\left\{
\mathbf{g}\in\mathbb{R}^{n}
~\middle|~
\mathbf{g}
=
\phi\!\left(\mathbf{U}\mathbf{w}
+ \mathbf{e}_n \beta\right),
\;
(\mathbf{w},\beta)\sim \mathcal{P}
\right\},
\end{equation}

where $\mathcal{P}$ denotes a prescribed sampling distribution over
$\mathbb{R}^{p}\times\mathbb{R}$.

Given a hidden-unit budget $h\in\mathbb{N}$ and a regularization parameter $\lambda>0$, the objective is to select a collection of $h$ hidden features from $\mathcal{S}$ that minimizes the ridge-regularized training loss. Formally, the design problem is
\begin{equation}
\label{eq:subset-problem}
\min_{\mathbf{G}=[\,\mathbf{g}_1~\cdots~\mathbf{g}_h\,]}
\;\;
\mathcal{J}(\mathbf{G})
\quad
\text{s.t.}\quad
\mathbf{g}_j\in\mathcal{S}\;\;\forall j,
\;\;
\mathbf{G}\in\mathbb{R}^{n\times h}.
\end{equation}

Equivalently, \eqref{eq:subset-problem} seeks a size-$r$ multiset of hidden features drawn from $\mathcal{S}$ such that, when concatenated with the direct input links $\mathbf{U}$, the resulting augmented representation minimizes the ridge objective in \eqref{eq:ridge-obj-motiv}. This formulation defines a \emph{combinatorial feature selection problem} over a large and typically continuous set of admissible features, rendering exact optimization computationally intractable.\\

\noindent
\textbf{Desired Properties:}
A practical alternative to one-shot RVFL should:
\begin{enumerate}
\item preserve \emph{closed-form} training at the output layer (no backpropagation);
\item \emph{adapt} hidden-unit selection to the supervised signal by prioritizing features that most reduce the current objective;
\item admit an \emph{anytime} training procedure with a \emph{monotonic decrease} of the objective as hidden nodes are added.
\end{enumerate}

\noindent
\textbf{Our Approach:}
We address \eqref{eq:subset-problem} via a \emph{stagewise, residual-guided} construction.
At each iteration, a manageable candidate pool is sampled from $\mathcal{S}$.
Each candidate is scored by its \emph{closed-form reduction} in \eqref{eq:ridge-obj-motiv} relative to the current model, the most effective units are selected, and the readout $\mathbf{\Omega}$ is refit exactly on the updated augmented matrix.
This procedure directly targets the limitations of standard RVFL while retaining its hallmark efficiency and closed-form training.

\section{Method}
\label{sec:method}
In this section, we propose a residual-guided construction for randomized neural networks that retains the key advantages of fixed randomized hidden mappings and closed-form training while overcoming the limitations of one-shot hidden-layer sampling. The core idea is to replace the single, task-uninformed draw of hidden features with a supervised, stagewise selection procedure guided by the current residual error.

\subsection{Training Objective}
\label{subsec:objective}

Given an input matrix $\mathbf{U}\in\mathbb{R}^{n\times p}$ and a hidden-feature matrix
$\mathbf{G}\in\mathbb{R}^{n\times h}$, RVFL forms the augmented representation
$\mathbf{A}=[\,\mathbf{U}\;\;\mathbf{G}\,]$ and learns a linear readout
$\mathbf{\Omega}\in\mathbb{R}^{(p+h)\times q}$ via ridge-regularized least squares,
\begin{equation}
\label{eq:method-ridge}
\mathbf{\Omega}^\star(\mathbf{A},\lambda)
=
\arg\min_{\mathbf{\Omega}}
\|\mathbf{A}\mathbf{\Omega}-\mathbf{T}\|_F^2
+
\lambda\|\mathbf{\Omega}\|_F^2,
\qquad \lambda>0.
\end{equation}

The resulting training objective value is
\begin{equation}
\label{eq:L-of-G}
\mathcal{J}(\mathbf{G})
=
\min_{\mathbf{\Omega}}
\|\,[\,\mathbf{U}\;\;\mathbf{G}\,]\mathbf{\Omega}-\mathbf{T}\|_F^2
+
\lambda\|\mathbf{\Omega}\|_F^2.
\end{equation}
Since the minimizer in \eqref{eq:method-ridge} is available in closed form, learning in RVFL reduces to constructing a hidden-feature matrix $\mathbf{G}$, under a prescribed hidden-unit budget $h$, that yields a small value of $\mathcal{J}(\mathbf{G})$. This observation motivates a direct focus on how hidden features are generated and selected.



\subsection{Residual-Guided Stagewise Construction}
\label{subsec:greedy}

Rather than generating all $h$ hidden features in a single step, the proposed method constructs the hidden-feature matrix $\mathbf{G}$ incrementally in a stagewise manner.

Let $\mathbf{G}_t\in\mathbb{R}^{n\times h_t}$ denote the hidden-feature matrix after stage $t$, with $h_0=0$, and define the corresponding augmented representation as
$\mathbf{A}_t=[\,\mathbf{U}\;\;\mathbf{G}_t\,]$. At each stage, the current model is first fitted exactly by solving the ridge regression problem, yielding the optimal readout and the associated residual
\begin{equation}
\label{eq:residual}
\mathbf{\Omega}_t=\mathbf{\Omega}^\star(\mathbf{A}_t,\lambda),
\qquad
\mathbf{R}_t= \mathbf{A}_t\mathbf{\Omega}_t - \mathbf{T}.
\end{equation}
The residual $\mathbf{R}_t$ represents the portion of the supervised signal that is not yet captured by the current representation. Since the training objective $\mathcal{J}(\mathbf{G}_t)$ depends directly on this residual, reducing $\|\mathbf{R}_t\|_F$ leads to a decrease in the objective value.

Each stage of the algorithm then proceeds by (i) generating a temporary pool of candidate random hidden features, (ii) evaluating each candidate according to its ability to reduce the current residual, and (iii) selecting a small subset of the most effective features to be permanently added to the model. After feature selection, the readout is refitted exactly using the augmented representation, and the process is repeated until the hidden-unit budget $h$ is reached or a stopping criterion is satisfied. The individual steps of candidate generation, residual-based scoring, and selection are detailed below.

\subsubsection{Candidate generation}

At stage $t$, a temporary pool of $M$ candidate hidden features is generated. Specifically, random affine parameters are drawn from the sampling distribution
$\mathcal{P}$ and transformed using the activation function $\phi(\cdot)$,
\begin{equation}
\label{eq:candidates}
\mathbf{g}^{(j)}
=
\phi(\mathbf{U}\mathbf{w}^{(j)}+\mathbf{e}_n\beta^{(j)}),
\quad
(\mathbf{w}^{(j)},\beta^{(j)})\sim\mathcal{P},
\quad j=1,\ldots,M.
\end{equation}
The resulting vectors $\{\mathbf{g}^{(j)}\}_{j=1}^M$ form a candidate pool that is used only at the current stage, from which a small number of informative features will be selected and permanently added to the model.

\subsubsection{Residual-based scoring}

Each candidate hidden feature is evaluated based on its ability to reduce the
current training objective.
For a candidate $\mathbf{g}$, we consider augmenting the current representation
$\mathbf{A}_t$ with this feature and optimizing only its associated coefficient,
while keeping the previously learned readout parameters fixed.
Under this setting, the optimal coefficient and the resulting \emph{exact decrease}
in the ridge-regularized objective are given by
\begin{equation}
\label{eq:delta}
\mathbf{\Omega}_{\mathbf{g}}^\star
=
\frac{\mathbf{g}^{\top}\mathbf{R}_t}{\|\mathbf{g}\|_2^2+\lambda},
\qquad
\Delta(\mathbf{g})
=
\frac{\|\mathbf{g}^{\top}\mathbf{R}_t\|_F^2}
{\|\mathbf{g}\|_2^2+\lambda}.
\end{equation}
The score $\Delta(\mathbf{g})$ therefore quantifies the immediate reduction in the
training objective that would be achieved by adding $\mathbf{g}$ alone, making it
a supervised and scale-aware criterion for greedy feature selection.

\subsubsection{Selection and refitting}

After scoring all $M$ candidates, we select the top-$k$ features with the largest
values of $\Delta(\mathbf{g})$ and append them to the current hidden-feature matrix,
\[
\mathbf{G}_{t+1}=[\,\mathbf{G}_t\;\;\mathbf{G}_t^{\mathrm{sel}}\,],
\qquad
\mathbf{A}_{t+1}=[\,\mathbf{U}\;\;\mathbf{G}_{t+1}\,].
\]
The linear readout is then refitted exactly on the updated representation by solving
\eqref{eq:method-ridge}, yielding $\mathbf{\Omega}_{t+1}$.
This stagewise process of candidate generation, residual-based scoring, selection,
and refitting is repeated until the hidden-unit budget $h$ is reached or a validation
criterion terminates training.

The aforementioned steps complete the description of the proposed residual-guided stagewise construction. 
For clarity and completeness, the entire procedure is summarized in Algorithm~\ref{alg:rgrvfl}.
\begin{algorithm}[t]
\caption{Residual-Guided Stagewise Algorithm}
\label{alg:rgrvfl}
\begin{algorithmic}[1]
\Require Data $\mathbf{U}\in\mathbb{R}^{n\times p}$, targets $\mathbf{T}\in\mathbb{R}^{n\times q}$; hidden nodes budget $h$; pool size $M$; block size $k$; regularization $\lambda>0$; sampling distribution $\mathcal{P}$; activation $\phi(\cdot)$.
\Ensure Hidden features $\mathbf{G}\in\mathbb{R}^{n\times h'}$ with $h'\le h$ and readout $\mathbf{\Omega}\in\mathbb{R}^{(p+h')\times q}$.

\State Initialize $\mathbf{G}_0 \gets [\,]$, $h_0 \gets 0$, $t\gets 0$.
\While{$h_t < h$ \textbf{and} stopping criterion not met}
    \State $\mathbf{A}_t \gets [\,\mathbf{U}\;\;\mathbf{G}_t\,]$.
    \State $\mathbf{\Omega}_t \gets \mathbf{\Omega}^\star(\mathbf{A}_t,\lambda)$ \Comment{closed-form ridge solution}
    \State $\mathbf{R}_t \gets \mathbf{A}_t\mathbf{\Omega}_t - \mathbf{T}$.
    \For{$j=1,\dots,M$} \Comment{temporary candidate pool}
        \State Sample $(\mathbf{w}^{(j)},\beta^{(j)}) \sim \mathcal{P}$.
        \State $\mathbf{g}^{(j)} \gets \phi(\mathbf{U}\mathbf{w}^{(j)}+\mathbf{e}_n\beta^{(j)}) \in\mathbb{R}^{n}$.
        \State $s^{(j)} \gets \dfrac{\|\mathbf{g}^{(j)\top}\mathbf{R}_t\|_F^2}{\|\mathbf{g}^{(j)}\|_2^2+\lambda}$ \Comment{score $\Delta(\mathbf{g}^{(j)})$}
    \EndFor
    \State Let $\mathcal{I}$ be indices of the top-$k$ scores $\{s^{(j)}\}_{j=1}^M$ (or fewer if $h_t+k>h$).
    \State $\mathbf{G}_t^{\mathrm{sel}} \gets [\,\mathbf{g}^{(j)}\,]_{j\in\mathcal{I}}$.
    \State $\mathbf{G}_{t+1} \gets [\,\mathbf{G}_t\;\;\mathbf{G}_t^{\mathrm{sel}}\,]$, \quad $h_{t+1}\gets h_t + |\mathcal{I}|$.
    \State $t\gets t+1$.
\EndWhile
\State $\mathbf{A} \gets [\,\mathbf{U}\;\;\mathbf{G}_t\,]$, \quad $\mathbf{\Omega}\gets \mathbf{\Omega}^\star(\mathbf{A},\lambda)$.
\State \Return $\mathbf{G}_t,\mathbf{\Omega}$.
\end{algorithmic}
\end{algorithm}

\subsection{Monotonicity Guarantee}
\label{subsec:guarantee}

In this subsection, we establish a key theoretical property of the proposed residual-guided stagewise method. Specifically, we show that the ridge-regularized training objective is non-increasing across stages, ensuring that the algorithm makes consistent progress as additional hidden features are incorporated.

\begin{lemma}
Let $\mathcal{J}(\mathbf{G})$ be defined as in \eqref{eq:L-of-G}.  
At each stage of the residual-guided procedure, the following inequality holds:
\[
\mathcal{J}(\mathbf{G}_{t+1}) \le \mathcal{J}(\mathbf{G}_t).
\]
\end{lemma}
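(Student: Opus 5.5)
The plan is a feasibility (zero-padding) argument: any readout that is admissible at stage $t$ can be embedded into stage $t+1$ without changing the objective value. Write $\mathbf{A}_{t+1}=[\,\mathbf{A}_t\;\;\mathbf{G}_t^{\mathrm{sel}}\,]$, where $\mathbf{G}_t^{\mathrm{sel}}\in\mathbb{R}^{n\times k'}$ with $k'=|\mathcal{I}|\ge 0$. Let $\mathbf{\Omega}_t=\mathbf{\Omega}^\star(\mathbf{A}_t,\lambda)$ be the minimizer attaining $\mathcal{J}(\mathbf{G}_t)$ in \eqref{eq:L-of-G}; it exists and is unique because the ridge problem is strongly convex for $\lambda>0$. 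Define the padded readout
\[
\widetilde{\mathbf{\Omega}}=\begin{bmatrix}\mathbf{\Omega}_t\\ \mathbf{0}_{k'\times q}\end{bmatrix}\in\mathbb{R}^{(p+h_{t+1})\times q}.
\]

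The key steps, in order, are as follows.

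\begin{enumerate}
\item Check that the fit is unchanged: $\mathbf{A}_{t+1}\widetilde{\mathbf{\Omega}}=\mathbf{A}_t\mathbf{\Omega}_t+\mathbf{G}_t^{\mathrm{sel}}\mathbf{0}=\mathbf{A}_t\mathbf{\Omega}_t$, so the residual term is the same.
\item Check that the penalty is unchanged: $\|\widetilde{\mathbf{\Omega}}\|_F^2=\|\mathbf{\Omega}_t\|_F^2$, since the Frobenius norm only adds the squares of the zero block.
\item Conclude that the objective of \eqref{eq:L-of-G} at stage $t+1$, evaluated at the feasible point $\widetilde{\mathbf{\Omega}}$, equals $\mathcal{J}(\mathbf{G}_t)$.
\item Since $\mathcal{J}(\mathbf{G}_{t+1})$ is a minimum over all readouts, including $\widetilde{\mathbf{\Omega}}$, deduce $\mathcal{J}(\mathbf{G}_{t+1})\le\mathcal{J}(\mathbf{G}_t)$.
\end{enumerate}

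None of this depends on which candidates are selected or on the scoring rule. The guarantee therefore holds for any appended block, and the residual-guided selection only affects how large the decrease is.

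As a complementary remark, one could sharpen the bound using the score \eqref{eq:delta}. For a single appended feature $\mathbf{g}$, optimize only its coefficient row $\mathbf{c}\in\mathbb{R}^{1\times q}$ with the earlier rows held at $\mathbf{\Omega}_t$. Expanding $\|\mathbf{R}_t+\mathbf{g}\mathbf{c}\|_F^2+\lambda\|\mathbf{c}\|_F^2$, the minimizer is $\mathbf{c}=-\mathbf{g}^\top\mathbf{R}_t/(\|\mathbf{g}\|_2^2+\lambda)$, which is the paper's $\mathbf{\Omega}_{\mathbf{g}}^\star$ up to the sign convention $\mathbf{R}_t=\mathbf{A}_t\mathbf{\Omega}_t-\mathbf{T}$. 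The resulting decrease is $\Delta(\mathbf{g})\ge 0$. Full refitting can only do better, which gives $\mathcal{J}(\mathbf{G}_{t+1})\le\mathcal{J}(\mathbf{G}_t)-\max_{j\in\mathcal{I}}\Delta(\mathbf{g}^{(j)})$.

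I expect no real obstacle; the only care needed is bookkeeping. The edge case $k'=0$ is trivial. I would also make explicit that $\mathbf{\Omega}_{t+1}$ is the exact ridge minimizer for $\mathbf{A}_{t+1}$, so that $\mathcal{J}(\mathbf{G}_{t+1})$ really is the minimum value rather than the value at some approximate solution.
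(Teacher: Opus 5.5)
Your proposal is correct and is essentially the paper's proof: both zero-pad the stage-$t$ minimizer over the new columns, observe that fit and penalty are unchanged, and conclude by minimality of $\mathcal{J}(\mathbf{G}_{t+1})$. Your complementary remark is also correct and goes beyond what the paper proves. This includes the sign correction $\mathbf{c}=-\mathbf{g}^\top\mathbf{R}_t/(\|\mathbf{g}\|_2^2+\lambda)$ under the convention $\mathbf{R}_t=\mathbf{A}_t\mathbf{\Omega}_t-\mathbf{T}$, and the sharper bound $\mathcal{J}(\mathbf{G}_{t+1})\le\mathcal{J}(\mathbf{G}_t)-\max_{j\in\mathcal{I}}\Delta(\mathbf{g}^{(j)})$ for nonempty $\mathcal{I}$.
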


\begin{proof}
Let $\mathbf{A}_t=[\,\mathbf{U}\;\;\mathbf{G}_t\,]$ and
$\mathbf{A}_{t+1}=[\,\mathbf{U}\;\;\mathbf{G}_{t+1}\,]$, where
$\mathbf{G}_{t+1}=[\,\mathbf{G}_t\;\;\mathbf{G}_t^{\mathrm{sel}}\,]$.
By construction, $\mathbf{A}_t$ is a column-submatrix of $\mathbf{A}_{t+1}$.

Let $\mathbf{\Omega}_t^\star$ denote an optimal solution attaining
$\mathcal{J}(\mathbf{G}_t)$ in \eqref{eq:L-of-G}.
We construct a coefficient matrix $\widetilde{\mathbf{\Omega}}$ for
$\mathbf{A}_{t+1}$ by assigning $\mathbf{\Omega}_t^\star$ to the columns
corresponding to $\mathbf{A}_t$ and setting the coefficients of the newly added
columns to zero.
Then,
\[
\mathbf{A}_{t+1}\widetilde{\mathbf{\Omega}}
=
\mathbf{A}_t\mathbf{\Omega}_t^\star,
\qquad
\|\widetilde{\mathbf{\Omega}}\|_F
=
\|\mathbf{\Omega}_t^\star\|_F.
\]
Consequently, $\widetilde{\mathbf{\Omega}}$ achieves the same objective value
on $\mathbf{A}_{t+1}$ as $\mathbf{\Omega}_t^\star$ does on $\mathbf{A}_t$.
Since $\mathcal{J}(\mathbf{G}_{t+1})$ is defined as the minimum of the
ridge-regularized objective over all coefficient matrices associated with
$\mathbf{A}_{t+1}$, it cannot exceed this value.
Therefore,
\[
\mathcal{J}(\mathbf{G}_{t+1}) \le \mathcal{J}(\mathbf{G}_t),
\]
which completes the proof.
\end{proof}

\subsection{Computational Complexity}
\label{sec:complexity}

At each stage, generating a pool of $M$ candidate hidden features requires
$O(npM)$ operations, corresponding to the evaluation of random affine mappings
and activation functions. Scoring the candidates incurs an additional cost of $O(nMq)$ to compute the
inner products $\mathbf{G}_t^{\mathrm{pool}\top}\mathbf{R}_t$, along with $O(nM)$ operations to evaluate the associated $\ell_2$ norms. Refitting the linear readout after stage $t$ using a naive approach costs
$O(n(p+h_t)^2)$ when using the primal formulation, or $O(n^2(p+h_t))$ when using
the dual formulation. However, since only a small block of $k$ hidden features is added at each stage,
the readout update can be performed incrementally by exploiting the resulting
low-rank modification of the augmented matrix.
This reduces the refitting cost to
$O\bigl(nk(p+h_t)\bigr) + O\bigl(k^2(p+h_t)\bigr).$
Let $T$ denote the total number of stages.
Since $T \approx \lceil h/k \rceil$, the overall computational cost of the
proposed method grows approximately linearly with the hidden-unit budget $h$
for fixed pool size $M$ and block size $k$.

\section{Experimental Results}
\label{Experiment-section}
This section evaluates the proposed residual-guided framework by integrating it into three RaNN architectures, namely RVFL, ELM, and BLS, yielding the corresponding residual-guided models RG-RVFL, RG-ELM, and RG-BLS. The proposed models are compared with their baseline counterparts to assess the effectiveness of residual-guided construction. Performance is evaluated using classification accuracy, average rank, and standard deviation. Experiments are conducted on a total of 71 benchmark datasets, comprising both binary and multiclass classification problems, downloaded from the UCI repository \cite{dua2017uci}. The detailed experimental setup along with the hyperparametr configuration is provided in Section S.I of the supplementary file.

\subsection{Performance Analysis}
Table~\ref{tab:final_summary} summarizes the average classification accuracy, standard deviation, and average rank of the baseline and residual-guided models over 32 binary and 39 multiclass datasets. Overall, the results consistently demonstrate the effectiveness of the proposed residual-guided learning framework across all three randomized architectures, namely RVFL, ELM, and BLS.

For binary classification tasks, the residual-guided variants achieve clear and consistent accuracy gains over their corresponding baselines. In particular, RG-RVFL, RG-ELM, and RG-BLS improve the average accuracy from 80.92\% to 82.92\%, from 80.82\% to 82.76\%, and from 81.65\% to 82.78\%, respectively. Similar trends are observed for multiclass datasets, where the proposed models yield notable improvements, with average accuracy increasing from 73.33\% to 75.94\% for RVFL, from 73.34\% to 75.35\% for ELM, and from 74.96\% to 76.20\% for BLS. These results indicate that residual-guided construction consistently enhances the representational capacity of the underlying randomized learners. Beyond accuracy, the proposed framework also exhibits improved stability. As reflected by the lower average standard deviation values, all residual-guided models demonstrate reduced performance variability across datasets. For binary problems, the standard deviation decreases from 8.91 to 7.88 for RVFL, from 8.36 to 7.76 for ELM, and from 6.24 to 5.75 for BLS. A similar reduction is observed in multiclass settings, confirming that the residual-guided strategy not only improves mean performance but also yields more reliable and robust predictions across diverse data distributions. The superiority of the proposed approach is further reinforced by the average rank analysis. Across both binary and multiclass tasks, the residual-guided variants consistently achieve near-optimal average ranks, all close to one, indicating dominant performance among the compared methods. In contrast, the baseline models exhibit noticeably higher ranks, reflecting inferior relative performance. This consistent ranking advantage highlights the effectiveness of the proposed residual-guided mechanism independent of the specific randomized architecture employed.

Collectively, these results demonstrate that incorporating residual-guided learning into RVFL, ELM, and BLS models leads to systematic improvements in accuracy, stability, and overall ranking performance. Importantly, the gains are observed across both binary and multiclass classification tasks, underscoring the generality of the proposed framework. Detailed dataset-wise results for all evaluated models are provided in Tables~S.I and~S.II of the supplementary material.

To empirically validate the monotonicity property established in Section~\ref{subsec:guarantee}, we analyze the stagewise behavior of the proposed residual-guided framework. The detailed analysis is provided in Section~S.II of the supplementary.

\begin{table}[t]
\centering
\caption{Average accuracy (\%), average standard deviation, and average rank of baseline and residual-guided models on 32 binary and 39 multiclass datasets.
}
\label{tab:final_summary}
\resizebox{\columnwidth}{!}{
\begin{tabular}{c c cc : cc : cc}
\toprule
\multirow{2}{*}{\textbf{Task}} 
& \multirow{2}{*}{\textbf{Metric}} 
& \multicolumn{6}{c}{\textbf{Method Comparison}} \\
\cmidrule(lr){3-8}
& 
& \textbf{RVFL} & \textbf{RG-RVFL$^{\dagger}$} 
& \textbf{ELM} & \textbf{RG-ELM$^{\dagger}$} 
& \textbf{BLS} & \textbf{RG-BLS$^{\dagger}$} \\
\midrule
\multirow{3}{*}{\textbf{Binary (32)}} 
& \textbf{Accuracy {\color{green}$\uparrow$}} 
& 80.92 & \textbf{82.92} 
& 80.82 & \textbf{82.76} 
& 81.65 & \textbf{82.78} \\

& \textbf{Std. Dev. {\color{blue}$\downarrow$}} 
& 8.91 & \textbf{7.88} 
& 8.36 & \textbf{7.76} 
& 6.24 & \textbf{5.75} \\

& \textbf{Rank {\color{red}$\downarrow$}} 
& 1.97 & \textbf{1.03} 
& 1.88 & \textbf{1.13} 
& 1.91 & \textbf{1.09} \\
\midrule
\multirow{3}{*}{\textbf{Multiclass (39)}} 
& \textbf{Accuracy {\color{green}$\uparrow$}} 
& 73.33 & \textbf{75.94} 
& 73.34 & \textbf{75.35} 
& 74.96 & \textbf{76.20} \\

& \textbf{Std. Dev. {\color{blue}$\downarrow$}} 
& 10.43 & \textbf{9.11} 
& 10.12 & \textbf{8.99} 
& 7.98 & \textbf{7.17} \\

& \textbf{Rank {\color{red}$\downarrow$}} 
& 1.96 & \textbf{1.04} 
& 1.94 & \textbf{1.06} 
& 1.86 & \textbf{1.14} \\
\bottomrule
\multicolumn{8}{l}{$^{\dagger}$ denotes the proposed counterpart.}
\end{tabular}
}
\end{table}

\section{Conclusions}\label{Conclusions-section}
In this work, we introduced a residual-guided learning framework for randomized neural networks that addresses a fundamental limitation of existing RaNN architectures, namely the reliance on a single, task-uninformed draw of hidden features. By replacing one-shot random feature construction with a supervised, stagewise selection mechanism guided by the current residual, the proposed framework enables adaptive and efficient hidden-layer expansion while preserving the hallmark advantages of randomized learning, including closed-form training and computational efficiency. We establish a monotonicity guarantee showing that the ridge-regularized training objective is non-increasing as residual-guided hidden features are progressively added. This property provides a foundation for progressive model construction and stands in contrast to conventional randomized networks, where intermediate representations offer no such guarantees. Extensive experimental results on 71 benchmark datasets from the UCI repository, spanning both binary and multiclass classification tasks, confirm the practical effectiveness of the proposed framework. The results highlight the generality of the framework and its ability to enhance diverse randomized models without altering their core training paradigms. In future, one can extend the proposed framework for regression, multilabel, and semi-supervised tasks.

\bibliographystyle{IEEEtranN}
\bibliography{refs.bib}

\clearpage
\setcounter{section}{0}
\setcounter{subsection}{0}
\setcounter{table}{0}
\setcounter{figure}{0}
\setcounter{equation}{0}
\renewcommand{\thesection}{S.\Roman{section}}
\renewcommand{\thesubsection}{\thesection.\Alph{subsection}}
\renewcommand{\thetable}{S.\Roman{table}}
\renewcommand{\thefigure}{S.\arabic{figure}}
\renewcommand{\theequation}{S.\arabic{equation}}
\renewcommand{\theHsection}{supp.\Roman{section}}
\renewcommand{\theHtable}{supp.\Roman{table}}
\renewcommand{\theHfigure}{supp.\arabic{figure}}
\renewcommand{\theHequation}{supp.\arabic{equation}}
\twocolumn[
\begin{center}
{\LARGE\bfseries Supplementary Material}\\
{\large Residual-Guided Randomized Neural Networks}
\end{center}
\vspace{1ex}
]



\section{Experimental Setup and Hyperparameter Configuration}
\label{sec:exp_setup}

All experiments are conducted using MATLAB R2023a on a system equipped with an 11th-generation Intel Core i7-11700 processor running at 2.50~GHz, 16~GB of RAM, and a Windows~11 operating system. To ensure a fair and statistically reliable evaluation, all models are assessed using a 5-fold cross-validation protocol combined with grid-based hyperparameter search. For each dataset, the samples are partitioned into five mutually exclusive folds. For a given hyperparameter configuration, the model is trained on four folds and evaluated on the remaining fold. This process is repeated until each fold has served once as the test set. The classification accuracy obtained across the five folds is averaged, and the configuration achieving the highest mean accuracy is reported as the final performance. In addition to the mean accuracy, the standard deviation across the five folds is also reported to reflect performance stability.

For RVFL, ELM, and their residual-guided counterparts, the regularization parameter $\lambda$ is selected from the set
$
\{10^{-5}, 10^{-4}, \ldots, 10^{5}\},
$
and the number of hidden nodes is tuned within the range
$
5{:}10{:}205.
$
For BLS and its residual-guided variant, the regularization parameter $\lambda$ is chosen from
$
\{10^{-6}, 10^{-4}, \ldots, 10^{6}\}.
$
The number of feature node windows is selected from
$1{:}2{:}21$,
the number of feature nodes per window from
$5{:}5{:}50$,
and the number of enhancement nodes from
$5{:}10{:}105$.
For the proposed residual-guided framework, additional hyperparameters control the stagewise feature selection process. In particular, the block size $k$ denotes the number of hidden features added at each stage, while the pool size $M$ specifies the number of candidate random hidden nodes generated at that stage. The choice of $k$ and $M$ is adapted based on the number of training samples $n$ and the input dimensionality $p$. When $n \leq 2000$ or $p \leq 50$, we consider $k \in \{1,2\}$ and $M \in \{50,100\}$. For intermediate-scale settings where $2000 < n \leq 20000$ or $50 < p \leq 300$, the search space is expanded to $k \in \{1,2,4\}$ and $M \in \{100,200\}$. For larger-scale problems, we use $k \in \{2,4\}$ and $M \in \{100,200\}$.
In addition, the total number of hidden features is constrained by a predefined budget, which is set as
$\max\!\left(50,\; \min\!\left(1000,\; \left\lfloor 0.2\, n \right\rfloor,\; 5p\right)\right)$.
This constraint ensures a balance between model expressiveness and computational efficiency while remaining consistent with the stagewise feature expansion procedure.


\begin{table*}[t]
\centering
\renewcommand{\arraystretch}{1.15}
\caption{Dataset-wise results on 32 binary datasets.}
\label{tab:binary32_acc_rank}
\resizebox{15cm}{!}{
\begin{tabular}{lcccccc}
\toprule
\textbf{Dataset} & \textbf{RVFL} & \textbf{RG-RVFL$^{\dagger}$} & \textbf{ELM} & \textbf{RG-ELM$^{\dagger}$} & \textbf{BLS} & \textbf{RG-BLS$^{\dagger}$} \\
\midrule
acute\_inflammation                 & 100      & 100      & 100      & 100      & 100      & 100 \\
acute\_nephritis                     & 100      & 100      & 100      & 100      & 100      & 100 \\
molec\_biol\_promoter               & 71.8182  & 75.5411  & 68.9610  & 77.3593  & 83.9827  & 85.8874 \\
parkinsons                           & 81.0256  & 82.0513  & 80.5128  & 81.0256  & 81.0256  & 82.0256 \\
pittsburg\_bridges\_T\_OR\_D        & 87.2381  & 89.2381  & 87.1429  & 88.1429  & 89.2381  & 90.1905 \\
bank                                 & 89.6925  & 90.0022  & 89.5378  & 89.8032  & 89.7368  & 89.7589 \\
blood                                & 76.6380  & 76.9047  & 76.6389  & 77.0398  & 77.3047  & 78.1047 \\
breast\_cancer                       & 70.1754  & 70.5263  & 70.1754  & 70.1754  & 70.9074  & 71.5910 \\
breast\_cancer\_wisc                 & 88.4173  & 90.1357  & 88.2785  & 89.5642  & 87.9908  & 88.9928 \\
breast\_cancer\_wisc\_diag           & 93.8457  & 95.2554  & 94.5552  & 95.0784  & 94.2028  & 95.3240 \\
chess\_krvkp                         & 81.1655  & 88.8291  & 79.4141  & 87.9228  & 84.3870  & 83.7923 \\
congressional\_voting                & 63.2184  & 63.6782  & 63.2184  & 63.4483  & 63.4483  & 63.9080 \\
cylinder\_bands                      & 65.6349  & 68.7626  & 68.1915  & 69.1357  & 69.5393  & 70.1275 \\
echocardiogram                       & 84.6724  & 85.4416  & 84.6724  & 86.2108  & 83.1624  & 85.1624 \\
heart\_hungarian                     & 72.4547  & 76.1777  & 76.8615  & 76.1835  & 78.5973  & 77.9018 \\
hepatitis                            & 85.1613  & 85.8065  & 84.5161  & 85.8065  & 87.0968  & 88.2258 \\
horse\_colic                         & 85.5942  & 86.6864  & 85.6016  & 87.2195  & 85.8793  & 86.0648 \\
ionosphere                           & 89.1911  & 92.0362  & 90.3380  & 91.4728  & 88.3461  & 89.1952 \\
mammographic                         & 79.9196  & 80.0248  & 79.6098  & 79.5056  & 79.0895  & 79.1899 \\
monks\_1                             & 84.3292  & 93.1532  & 83.4299  & 91.8951  & 76.0505  & 81.9932 \\
monks\_2                             & 82.3457  & 89.5014  & 82.5193  & 90.8375  & 73.1915  & 74.1942 \\
monks\_3                             & 91.3382  & 92.7846  & 91.5152  & 93.1466  & 86.8272  & 88.9894 \\
musk\_1                              & 67.2763  & 73.1206  & 67.2456  & 71.2368  & 76.2654  & 78.8969 \\
oocytes\_merluccius\_nucleus\_4d    & 82.2898  & 83.9517  & 82.1908  & 84.1502  & 82.7800  & 84.2688 \\
oocytes\_trisopterus\_nucleus\_2f   & 78.8314  & 79.8181  & 77.5206  & 80.0390  & 79.0554  & 80.2728 \\
pima                                 & 73.5736  & 74.4877  & 72.6594  & 74.4843  & 72.5236  & 74.0049 \\
spect                                & 68.3019  & 68.6792  & 67.5472  & 67.9245  & 69.0566  & 71.9434 \\
spectf                               & 79.3431  & 80.4612  & 79.3431  & 79.7135  & 79.7135  & 79.7205 \\
statlog\_australian\_credit          & 68.2609  & 68.5507  & 68.1159  & 68.9855  & 68.4058  & 68.7159 \\
statlog\_german\_credit              & 77.2000  & 77.7000  & 76.4000  & 77.9000  & 76.9000  & 77.9000 \\
statlog\_heart                       & 81.4815  & 81.8519  & 80.7407  & 80.7407  & 82.2222  & 84.2222 \\
tic\_tac\_toe                         & 89.0303  & 92.3653  & 88.7178  & 92.0561  & 96.0143  & 98.3273 \\
\midrule
\textbf{Average Accuracy}            & \textbf{80.92} & \textbf{82.92} & \textbf{80.82} & \textbf{82.76} & \textbf{81.65} & \textbf{82.78} \\
\textbf{Average Std.\ Dev.}          & \textbf{8.91}      & \textbf{7.88}        & \textbf{8.36}        & \textbf{7.76}        & \textbf{6.24}        & \textbf{5.75} \\
\textbf{Average Rank}                & \textbf{1.97}   & \textbf{1.03}     & \textbf{1.88}       & \textbf{1.13}       & \textbf{1.91}     & \textbf{1.09} \\
\bottomrule
\multicolumn{7}{l}{$^{\dagger}$ denotes the proposed counterpart.}
\end{tabular}
}
\end{table*}


\begin{table*}[t]
\centering
\renewcommand{\arraystretch}{1.15}
\caption{Dataset-wise result on 39 multi-class datasets.}
\label{tab:multiclass39_acc_rank}
\resizebox{15cm}{!}{
\begin{tabular}{lcccccc}
\toprule
\textbf{Dataset} & \textbf{RVFL} & \textbf{RG-RVFL$^{\dagger}$} & \textbf{ELM} & \textbf{RG-ELM$^{\dagger}$} & \textbf{BLS} & \textbf{RG-BLS$^{\dagger}$} \\
\midrule
abalone                         & 63.4179 & 63.8732 & 63.3465 & 63.8729 & 63.4657 & 63.9856 \\
annealing                       & 89.1924 & 91.1962 & 89.6356 & 91.0838 & 89.6344 & 90.6400 \\
arrhythmia                      & 65.4823 & 71.0208 & 65.7192 & 68.3663 & 64.1587 & 65.2576 \\
audiology\_std                  & 69.2949 & 69.8718 & 67.7821 & 69.8333 & 69.8333 & 70.8718 \\
cardiotocography\_10clases      & 69.6156 & 71.3099 & 69.2395 & 70.7451 & 65.5708 & 66.4182 \\
cardiotocography\_3clases       & 85.2331 & 86.2683 & 85.3743 & 86.2683 & 85.9390 & 86.8921 \\
contrac                         & 40.5239 & 41.6098 & 40.8666 & 41.2729 & 45.0847 & 45.3691 \\
energy\_y1                      & 89.1902 & 91.5338 & 89.3116 & 90.8777 & 88.4135 & 89.5392 \\
energy\_y2                      & 90.3599 & 91.9277 & 89.9728 & 91.5338 & 90.1070 & 91.3667 \\
flags                           & 52.1188 & 54.1430 & 52.1053 & 53.1309 & 53.1309 & 53.6302 \\
glass                           & 38.1506 & 43.2780 & 39.0808 & 42.8128 & 40.0111 & 40.9524 \\
hayes\_roth                     & 61.2500 & 62.5000 & 60.0000 & 63.1250 & 65.6250 & 67.7500 \\
heart\_cleveland                & 59.6995 & 60.0437 & 58.6721 & 59.0601 & 59.6885 & 59.7104 \\
heart\_switzerland              & 45.6000 & 47.3000 & 47.3000 & 48.9000 & 47.2667 & 45.6667 \\
heart\_va                       & 39.5000 & 41.0000 & 39.0000 & 41.0000 & 41.0000 & 42.0000 \\
image\_segmentation             & 87.8355 & 90.2597 & 86.9697 & 90.3030 & 88.7446 & 90.8596 \\
iris                            & 75.3333 & 76.0000 & 76.0000 & 78.0000 & 77.3333 & 79.3333 \\
letter                          & 80.3300 & 85.4300 & 80.0900 & 84.7800 & 84.7350 & 87.9345 \\
low\_res\_spect                 & 87.7588 & 89.2647 & 87.7605 & 89.0760 & 86.2511 & 88.3560 \\
lung\_cancer                    & 50.9524 & 66.1905 & 53.8095 & 53.3333 & 71.9048 & 72.3810 \\
lymphography                    & 86.4828 & 88.4598 & 85.8161 & 88.4828 & 85.7701 & 86.0575 \\
molec\_biol\_splice             & 51.8809 & 58.0251 & 51.8809 & 57.0219 & 68.0564 & 69.4679 \\
nursery                         & 70.5324 & 75.2546 & 70.2623 & 73.7809 & 70.7022 & 73.8426 \\
optical                         & 96.2100 & 98.1139 & 96.4769 & 97.9181 & 96.6370 & 97.1174 \\
page\_blocks                    & 95.2491 & 95.7974 & 95.2490 & 95.6696 & 95.3771 & 96.7607 \\
pendigits                       & 98.4443 & 99.0448 & 98.4807 & 98.9902 & 98.9811 & 99.1630 \\
pittsburg\_bridges\_MATERIAL    & 75.8442 & 81.5152 & 77.8788 & 82.4675 & 73.9827 & 75.1602 \\
pittsburg\_bridges\_TYPE        & 42.8571 & 49.5238 & 42.8571 & 50.4762 & 42.8571 & 49.5238 \\
post\_operative                 & 72.2222 & 73.3333 & 72.2222 & 72.2222 & 70.0000 & 70.3333 \\
seeds                           & 87.6190 & 89.5238 & 87.1429 & 89.0476 & 89.5238 & 91.9048 \\
semeion                         & 84.4321 & 91.0227 & 84.5565 & 89.9552 & 87.1925 & 92.1602 \\
soybean                         & 89.4450 & 90.6172 & 88.8622 & 89.5942 & 89.0103 & 88.8643 \\
statlog\_landsat                & 81.5385 & 84.6775 & 81.7249 & 84.5532 & 82.3621 & 84.8562 \\
statlog\_vehicle                & 81.0846 & 83.6881 & 81.3227 & 82.8625 & 81.3241 & 80.8514 \\
teaching                        & 70.7097 & 72.0215 & 70.7097 & 71.3548 & 73.3763 & 72.7097 \\
vertebral\_column\_3clases      & 64.8387 & 65.4839 & 65.1613 & 65.1613 & 67.4194 & 69.6774 \\
wall\_following                 & 77.2369 & 79.8034 & 77.1454 & 80.4447 & 78.1901 & 80.0443 \\
wine                            & 97.2222 & 96.6349 & 95.5238 & 96.0952 & 97.7778 & 97.2222 \\
zoo                             & 95.0000 & 95.0000 & 95.0000 & 95.0000 & 97.0000 & 97.0000 \\
\midrule
\textbf{Average Accuracy}       & \textbf{73.33} & \textbf{75.94} & \textbf{73.34} & \textbf{75.35} & \textbf{74.96} & \textbf{76.20} \\
\textbf{Average Std.\ Dev.}     & \textbf{10.43} & \textbf{9.11}  & \textbf{10.12} & \textbf{8.99}  & \textbf{7.98}  & \textbf{7.17} \\
\textbf{Average Rank}           & \textbf{1.96}  & \textbf{1.04}  & \textbf{1.94}  & \textbf{1.06}  & \textbf{1.86}  & \textbf{1.14} \\
\bottomrule
\multicolumn{7}{l}{$^{\dagger}$ denotes the proposed counterpart.}
\end{tabular}
}
\end{table*}

\begin{figure*}[t]
\centering
\subfloat[]{
    \includegraphics[width=0.30\textwidth]{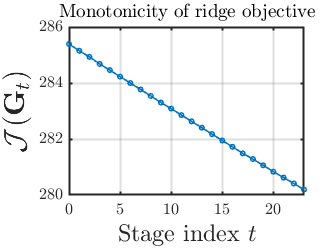}
    \label{fig:mono_objective}
}
\hfill
\subfloat[]{
    \includegraphics[width=0.30\textwidth]{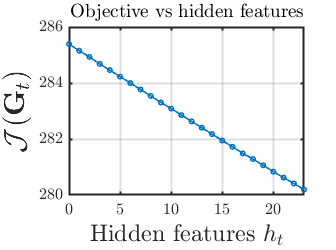}
    \label{fig:objective_vs_hidden}
}
\hfill
\subfloat[]{
    \includegraphics[width=0.30\textwidth]{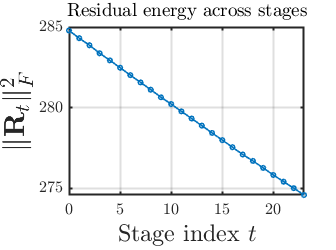}
    \label{fig:residual_energy}
}
\caption{Stagewise behavior of the proposed residual-guided framework on the breast\_cancer dataset, illustrating the monotonic decrease of the ridge-regularized training objective and residual energy as hidden features are incrementally added.}
\label{fig:monotonicity}
\end{figure*}
\section{Empirical Illustration of Monotonicity}
To empirically validate the monotonicity property established in Section~IV.C of the main script, we analyze the stagewise behavior of the proposed residual-guided framework. We consider the breast\_cancer dataset and track the evolution of the ridge-regularized training objective $\mathcal{J}(\mathbf{G}_t)$, the number of selected hidden features $h_t$, and the residual energy $\|\mathbf{R}_t\|_F^2$ across successive stages of the algorithm. Fig.~\ref{fig:monotonicity} illustrates these quantities as the residual-guided construction proceeds. As additional hidden features are incrementally selected based on residual-driven utility, the training objective exhibits a strictly non-increasing trend with respect to both the stage index and the hidden-feature budget. This behavior is consistent with the theoretical guarantee and confirms that each stage makes concrete progress toward reducing the overall objective. Moreover, the residual energy decreases steadily across stages, indicating that newly added features effectively capture previously unexplained components of the supervised signal. These results provide clear empirical evidence that the proposed residual-guided strategy enables stable and consistent optimization behavior, in contrast to one-shot random feature construction.


\end{document}